\documentclass{article}

\usepackage[preprint]{neurips_2026}

\usepackage[utf8]{inputenc} 
\usepackage[T1]{fontenc}    
\usepackage{hyperref}       
\usepackage{url}            
\usepackage{booktabs}       
\usepackage{amsfonts}       
\usepackage{amsmath}
\usepackage{nicefrac}       
\usepackage{microtype}      
\usepackage{xcolor}         
\usepackage{graphicx}
\usepackage{color}
\usepackage{tabularray}
\usepackage{amsthm}
\usepackage{enumitem}

\theoremstyle{plain}
\newtheorem{theorem}{Theorem}[section]

\theoremstyle{definition}
\newtheorem{definition}[theorem]{Definition}

\theoremstyle{remark}

\title{Position: Universal Time Series Foundation Models Rest on a Category Error}

\author{%
Xilin Dai\\
ZJU-UIUC Institute\\
Zhejiang University\\
Hangzhou, China\\
  \texttt{xilin2023@zju.edu.cn}
  \And
Wanxu Cai\\
School of Software\\
Tsinghua University\\
Beijing, China\\
  \texttt{caiwx22@mails.tsinghua.edu.cn}
  \And
Zhijian Xu\\
Department of Computer Science and Engineering\\
The Chinese University of Hong Kong\\
Hong Kong, China\\
  \texttt{zjxu21@cse.cuhk.edu.hk}
  \And
Qiang Xu\\
Department of Computer Science and Engineering\\
The Chinese University of Hong Kong\\
Hong Kong, China\\
  \texttt{qxu@cse.cuhk.edu.hk} 
}

\begin{document}

\maketitle

\begin{abstract}
This position paper argues that \textbf{the pursuit of ``Universal Foundation Models for Time Series" rests on a fundamental Category Error, mistaking a structural Container for a semantic Modality.} Because time series domains hold incompatible generative processes, treating them as a unified modality forces models into ill-posed optimization. Consequently, we argue that the model's inability to handle temporal non-stationarity is the direct consequence of this Category Error. Finally, our Call to Action urges the community to adopt a control-theoretic perspective. Evaluation metrics must pivot from zero-shot accuracy to new benchmarks measuring Time-to-Recovery (TTR), which quantifies adaptability after an external intervention. We propose that the emerging Conditional Control Agent paradigm provides a promising pathway to overcome these limits.
\end{abstract}

\section{Introduction: The Universal Prior Fallacy}
The current landscape of machine learning is defined by the quest for universality. Following the transformative success of Large Language Models (LLMs) like GPT-4 \cite{achiam2023gpt} and Vision Transformers (ViTs) \cite{dosovitskiy2020image}, the research community has turned its gaze toward the final frontier of data types: Time Series. The prevailing hypothesis driven by the release of models is that ``scale is all you need." The argument posits that if we aggregate enough temporal data—billions of tokenized points from stock markets, weather stations, intensive care units, and server logs—a sufficiently large Transformer will learn a ``Universal Prior" of temporal dynamics, unlocking emergent forecasting capabilities analogous to the reasoning abilities of LLMs \cite{schmidhuber2024learning}. This aspiration is rooted in the belief that temporal sequence modeling shares a fundamental isomorphism with language modeling, where the prediction of the next token (or time step) serves as a proxy for understanding the underlying generative grammar of the world.

This paper challenges this hypothesis. \textbf{We posit that the pursuit of a Universal Time Series Foundation Model is based on a category error that confuses a Data Type (a container for information) with a Modality (a shared semantic space).} Our critique is not that broad pretraining is useless; it is that broad pretraining without explicit domain, task, or intervention context should not be treated as the time-series analogue of language-model pretraining. In language, the word ``apple" shares a limited semantic grounding whether it appears in a recipe or a poem (a fruit or company). In vision, an ``edge" or a ``texture" obeys the same laws of optics whether it outlines a cat or a car. Time series data do not possess such shared grounding. A ``spike" in a financial ticker (driven by market sentiment and liquidity shocks) has no causal or physical relationship to a ``spike" in an EKG (driven by electrophysiological depolarization) \cite{lee2025cognitive} or a ``spike" in a turbine’s vibration sensor (driven by mechanical friction).

Treating these disparate signals as a unified modality creates an ill-posed optimization problem. When a model is forced to minimize loss across these conflicting physical rules, it does not learn a generative truth; it learns the intersection of priors—the lowest common denominator of statistical properties. We risk computational waste by building what are essentially \textbf{``Generic Filters"}—expensive approximations of simple statistical smoothers—instead of solving the core problem of forecasting: temporal non-stationarity and drift.

The stakes for this error are high. Real-world systems are open and dynamic; they are subject to interventions and regime shifts that a history-only model cannot predict \cite{chen2025foundation, xu2025one}. To articulate this, our position is structured as follows. First, we outline \textbf{Argument I: The Category Error}, which suggests under stated assumptions that the union of time series domains collapses to a trivial symmetry group. Second, we present \textbf{Argument II: Non-Stationarity as the Consequence of Category Error}, formally demonstrating that these generic filters are structurally incapable of adapting to external regime shifts by hitting the Autoregressive Blindness Bound. Finally, we conclude with a \textbf{Call to Action}, urging the community to shift from zero-shot accuracy to Time-to-Recovery (TTR) as the core evaluation metric, and advocating for conditional control systems.

\section{Related Works}
\subsection{Foundation Models for Time Series}
The aspiration to replicate the success of Natural Language Processing (NLP) in time series field has driven a bifurcation in research: \textbf{Native Pre-training} and \textbf{Language Model Adaptation}. Native time series pre-training has progressed through several architectural paradigms. Models such as MOMENT \cite{goswami2024moment} and Moirai \cite{woo2024moirai} adopt encoder-only architectures with masked modeling objectives. TimeGPT\cite{garza2023timegpt}, TimesFM \cite{das2024timesfm}, and Moirai 2.0 \cite{liu2025moirai2.0}, shifted to decoder-only, autoregressive transformers. Chronos \cite{ansari2024chronos} and Chronos-2 \cite{ansari2025chronos} formulate time series forecasting as a language modeling problem via value quantization. More recent efforts explore alternative scaling and modeling strategies, such as Time-MoE \cite{shi2024timemoe}, which employs a Mixture-of-Experts architecture, and Sundial \cite{liu2025sundial}, which adopts a continuous flow-matching formulation.

In parallel, another line of research repurposes off-the-shelf large language models for time series forecasting. LLMTime \cite{gruver2023llmtime} directly inputs the forecasting task into the prompt. GPT4TS \cite{zhou2023gpt4ts} and Time-LLM \cite{jin2023timellm} introduce reprogramming-based approaches that map time series patches into the embedding space of frozen LLMs. Building on autoregressive alignment, AutoTimes \cite{liu2024autotimes} enables in-context forecasting, while UniTime \cite{liu2024unitime} proposes a unified Language–Time Transformer to jointly process temporal data and language prompts. The common hypothesis underlying these works is that ``Scaling Laws" are applicable to time series, and there exists a universal ``Temporal Grammar". However, this assumption is challenged by several works, mainly in two key areas: \textbf{Debate of Model Architectures} and \textbf{Distributional Discrepancy}.

\vspace{-8pt}

\subsection{Debate of Model Architectures}
The trajectory of time series modeling has attempted to mirror the paradigm shift in NLP, though with contested results. In NLP, the transition from RNN \cite{elman1990rnn} and LSTM \cite{hochreiter1997lstm} to the Transformer \cite{vaswani2017transformer} architecture provided substantial gains in capturing long-range dependencies. Subsequent scaling of this architecture established a clear ``scaling law" where increased parameters and data correlated directly with model capabilities.

To address long-sequence time series modeling, researchers initially directly reused the transformer structure. Informer \cite{zhou2021informer} and Autoformer \cite{wu2021autoformer} introduced sparse attention and decomposition-based designs, followed by Fedformer \cite{zhou2022fedformer} and Crossformer \cite{zhang2023crossformer}, which incorporate frequency-domain modeling and cross-dimension attention. \textbf{However, unlike in NLP, the supremacy of Transformers in time series has been challenged.} DLinear \cite{zeng2023are} demonstrated that a simple, single-layer linear model could outperform complex, modification-heavy Transformers. This sparked a resurgence of MLP-based and linear architectures, such as TiDE \cite{das2023tide}, TimeMixer \cite{wang2024timemixer}, and TimePrism \cite{dai_samples_2025}. Some works also explored the underlying causes and pointed out the ``Attention Collapse" phenomenon \cite{dong2023stabilizing}, which indicates that for temporal data, the heavy machinery of the Transformer often degrades into an expensive implementation of a low-pass filter or a seasonal moving average, lacking the emergent reasoning capabilities.

\subsection{Distributional Discrepancy}
\textbf{Apart from the disagreement on the model architecture, the more significant issue that time series faces is the issue of generalization.} Unlike NLP where semantic tokens maintain consistent meaning across contexts, time series analysis is plagued by a fundamental violation of the i.i.d. assumption. This distributional discrepancy manifests along two orthogonal axes—\textbf{temporal and spatial}—both of which undermine the premise of a unified FM. The temporal non-stationarity indicates that the training distribution differs from the future distribution. In the real world, the simplest form of distribution change is the variation in the statistical values of a sequence. The most common types are changes in the mean and variance. Methods like RevIN \cite{kim2021revin} and Non-stationary Transformers \cite{liu2022nonstationtrans} reduced this problem. However, apart from the changes in the statistical data, changes in the pattern or a sudden shift would further increase the unpredictability of the sequence, and these methods are unable to address this issue. The spatial non-stationarity indicates the distribution differences among different domains. Evidence showed that pre-training on diverse but weakly related datasets can degrade downstream performance compared to models trained directly on target data \cite{gongtowardsadap, xu2025gcb}. Only by dynamically adjusting the parameters of the distribution during cross-domain transfer can the performance of the model be improved.

\subsection{Novel Paradigms}
To address the issues mentioned above, some works have already begun to consider a paradigm shift.

\begin{itemize}[noitemsep, topsep=0pt, leftmargin=*]

\item \textbf{From Multimodal Forecasting to Causal Inference} To address the unpredictable impact of sudden changes in the external environment on time series, some recent studies have incorporated text as input, providing more environmental information to guide the direction of time series forecasting, such as Intervention-aware forecasting \cite{xu2024intervention}, From News to Forecast \cite{wang2024fromnews}, CiK \cite{williams2024cik}. To assess these emerging multimodal capabilities, high-fidelity multimodal benchmarks are being proposed as standardized evaluation frameworks \cite{xu2026fideltshighfidelitymultimodalbenchmark}. To put it more precisely, this is the potential application of causal inference in the field of time series, including some works such as the learning of causal structure \cite{chen2024causalstructure}, causal discovery \cite{cheng2023cuts}, and predictions in counterfactual scenarios \cite{yan2023counterfactual} \cite{wang2023counterfactual}.

\item \textbf{Test-Time Training (TTT)} Acknowledging the non-stationarity of time series and the impossibility for a fixed set of pre-trained weights to anticipate all future regime shifts, recent work has begun to adopt TTT as a principled alternative to the zero-shot assumption. TTT leverages information contained in the test instance itself \cite{sun2020ttt}. In temporal modeling, TTT has evolved from test-time adaptation, including general fields \cite{kim2024tttgeneral} \cite{kim2025tttgeneral} as well as specialized fields  \cite{guo2025ttttraffic}, to test-time learning \cite{christou2024test, xu2026internaldiagnosisexternalauditing}.

\item \textbf{Agents for Forecasting} The agent is an ideal environment that has the potential of integrating all of the aforementioned requirements. Existing agent frameworks use LLM to automatically perform feature engineering, model selection and code generation \cite{zhao2025timeseriesscientist, cai2025timeseriesgym, MoiraiAgentAgenticFramework}. However, such an AutoML process is far from sufficient. The agent still has not yet become the role responsible for conducting autonomous data retrieval, making decisions, and performing causal inferences \cite{xu2026contextualagenticmemorymemo}, presenting a critical bottleneck for continuous adaptation in time series.

\end{itemize}

\section{Argument I: The Category Error and Dimension Collapse}
The foundation of our critique lies in the distinction between a Modality and a Container. This distinction is not merely semantic; it dictates the success or failure of transfer learning, which is the engine of the Foundation Model paradigm.

\subsection{Modality vs. Data Type}
A \textbf{Modality}, such as Vision or Language, is governed by a group of invariant transformations (symmetries) that preserve semantics. In computer vision, concepts like translation invariance ($Cat(x, y) \approx Cat(x+\Delta, y)$) and occlusion are universal physical laws applicable to every image. These invariant laws create a structured manifold where transfer learning thrives: a model pre-trained on ImageNet learns to detect edges and textures that are genuinely useful when fine-tuned on medical X-rays \cite{zhang2022viability, xu2025clip}.

A \textbf{Container}, by contrast, is defined only by its data structure. Time series is a container defined by the tuple $(t, x_t)$; the semantic logic and generative rules governing each sequence are domain-specific and share no universal set of invariant transformations. 
\begin{itemize}[noitemsep, topsep=0pt, leftmargin=*]
\item \textbf{Physics (e.g., Weather, Energy):} Governed by conservation laws and differential equations. These systems are often continuous and exhibit smooth, cyclical dynamics \cite{chen2025foundation,dai_socnet_2025}.
\item \textbf{Economics (e.g., Finance, Sales):} Governed by game theory, psychology, and complex adaptive systems. These systems are often reflexive (predictions affect outcomes) and exhibit random walk behavior where history is a poor predictor of the future \cite{gottimukkala2025transfer_academic}.
\item \textbf{Biology (e.g., ECG, EEG):} Governed by electrochemistry and physiology. These signals are often strictly periodic but subject to catastrophic failure modes (arrhythmia) \cite{lee2025cognitive}.
\end{itemize}
Crucially, there are \textbf{no universally invalid transitions} in time series. A vision model can learn that an image of pure static noise is \textbf{invalid} because it mathematically violates the learned physics of light and objects. In time series, however, a system can undergo a legitimate \textbf{bifurcation}, transitioning from a stable, predictable state to a chaotic one in an instant. A monolithic foundation model therefore cannot dismiss a sudden structural break as noise or an error; it must accept it as a potential, legitimate reality.

The Category Error is not only limited to forecasting, but it also degrades discriminative tasks. Sequence dynamics carry contradictory semantics across domains. Consider an Anomaly Detection task where sensor values suddenly drop to zero and flatline. If this data comes from an industrial battery, it indicates a catastrophic sensor disconnection. However, if the same numerical sequence comes from retail sales, it simply indicates the store has closed for the night. Similarly, in Feature Extraction, a sudden localized spike might correspond to a promotional event in e-commerce, but to a thermal runaway in chemical engineering. If a generic TSFM maps these to the identical latent representation, it strips away semantics.

When disconnected domains are forced to share a single representation space, negative transfer frequently emerges. Evidence from recent benchmarks supports this. Research on popular foundational models shows that while they perform well on generic interpolation, they often struggle to beat simple univariate baselines like ARIMA or seasonal Naive methods on specific tasks \cite{ma2024assessing}. More importantly, studies on negative transfer in time series forecasting demonstrate that pre-training on a diverse corpus can systematically degrade performance on specific downstream tasks compared to training from scratch. This is particularly prevalent in finance, where the governing rules of the source domain (e.g., weather smoothness) actively contradict the target domain (e.g., market volatility) \cite{lee2024reinforced, ma2022intra}. This phenomenon has been empirically validated in studies showing that broadly pre-trained models struggle to outperform specialized baselines in financial risk analysis due to this exact domain impedance mismatch \cite{wu2025delphyne_arxiv}.

\subsection{The Modality Spectrum: Area-Specific vs. Universal Models}
We emphasize that the distinction between a Container and a Modality operates on a \textit{spectrum}. Some time series domains do share a substantial intersection of statistical symmetries, such as temporal smoothness, bounded variance, or strict seasonality. For instance, high-frequency energy load and traffic forecasting share similar periodic constraints. Co-training models on such intimately related clusters is theoretically sound because their structural intersection is non-trivial. We actively support the development of these \textbf{Area-Specific Large Models (ASLMs)}. Our theoretical critique is singularly targeted at the pursuit of \textbf{Universal TSFMs}—monolithic models that force disconnected domains (e.g., mixing the continuous differential equations of fluid dynamics with the reflexivity-driven random walks of finance) into a single latent space.

\subsection{A Formal Argument: Quotient Manifold Collapse}
We formulate the distinction between a Modality and a Container mathematically to analyze dimensionality reduction during representation learning. Let the ambient data space be $\mathcal{X} = \mathbb{R}^d$, where, for a time series of length $T$ and $C$ channels, $d = T \times C$.

\begin{definition}[Modality]
A \textbf{Modality} is a proper subset of $\mathcal{X}$, constrained by a specific, non-trivial invariance group $G_{\text{mod}}$. Data points $x$ are confined to a lower-dimensional quotient manifold $\mathcal{M} = \mathcal{X}/G_{\text{mod}}$ formed by the orbits of $G_{\text{mod}}$. Mathematically:
\begin{equation}
    \mathcal{M} = \{x \in \mathcal{X} \mid \text{Constraints}(x)\}, \quad \dim(\mathcal{M}) \ll d
\end{equation}
For a learning task $f: \mathcal{X} \to \mathcal{Y}$, this shared structural invariance implies that for all $g \in G_{mod}$, the function is invariant ($f(g \cdot x) = f(x)$) or equivariant. This allows the model to learn efficiently on the smaller quotient space.
\end{definition}

\begin{definition}[Container (Data Type)]
A \textbf{Container} refers to the entire space $\mathcal{X}$ itself, lacking any unifying, shared non-trivial symmetries across its subset distribution ($G_{\text{container}} = \{e\}$). 
\begin{equation}
    \text{Support}(D_{\text{container}}) \approx \mathcal{X}
\end{equation}
Consequently, its underlying topological quotient space collapses to an isomorphism with the original ambient space ($\mathcal{X}/\{e\} \cong \mathcal{X}$), meaning no global semantic dimensionality reduction is physically possible.
\end{definition}

\textbf{Theorem 3.1:} \textit{The universal space of all unconstrained time series domains $\mathcal{U} = \bigcup_{k} \mathcal{D}_k$ is a Container, and its corresponding universal quotient manifold dimension is $\dim(\mathcal{X})$.}

\textit{Proof} For $\mathcal{U}$ to function as a unified Modality, there must exist a universal symmetry group $G_{univ} \neq \{e\}$ that strictly preserves generative rules across every disparate constituent domain $\mathcal{D}_k \in \mathcal{U}$. Algebraically, $G_{univ}$ is defined as the intersection of all domain-specific stabilizer subgroups:
\begin{equation}
    G_{univ} = \bigcap_{k} \text{Stab}(\mathcal{D}_k)
\end{equation}
However, real-world temporal domains exhibit mutually exclusive, contradictory constraints. For instance, geometric scale transformations ($x \to c \cdot x$) inherently stabilize stochastic financial returns but completely violate strict amplitude constraints denoting physiological pathology in medical ECGs. 

Furthermore, control theory corroborates this impossibility. The Ho-Kalman algorithm for state-space realization proves that for \textit{any} mathematically arbitrary sequence vector, there exists a Linear Time-Invariant (LTI) system capable of perfectly generating it. Because any sequence is realizable by some underlying physical process, no universal structural properties uniquely rule out portions of $\mathbb{R}^d$. 

This absolute incompatibility across heterogeneous domains algebraically forces the intersection to collapse to the trivial identity element:
\begin{equation}
    G_{univ} \to \{e\}
\end{equation}
The effective dimension of the universal quotient manifold is thus strictly equal to the ambient space:
\begin{equation}
    \dim(\mathcal{X} / G_{univ}) = \dim(\mathcal{X} / \{e\}) = \dim(\mathcal{X})
\end{equation}
Since $\dim(\mathcal{M}_{univ}) = \dim(\mathcal{X})$, no functional semantic dimensionality reduction occurs. A representation cannot rely on shared semantics alone for $\mathcal{U}$. The learning algorithm is mathematically forced to strictly operate on the full high-dimensional ambient space $\mathbb{R}^{T \times C}$.

\subsection{The Mixture of Experts Defense and Observational Equivalence}
A significant counter-argument posits that modern architectures, specifically Mixture of Experts (MoE) models \cite{wu2026taskawaremixtureofexpertstimeseries}, learn the \textit{union} of disjoint manifolds ($\bigcup \mathcal{M}_k$), not their intersection. In this view, a gating network directs data to specialized expert sub-networks, bypassing the need for a universal grammar.

Our rebuttal to this relies on the principle of \textbf{Observational Equivalence} in short-context time series. In a modality like vision, a router easily distinguishes domains: a pixel grid of a cat has zero statistical overlap with a CT scan. In time series, raw numerical trajectories from disparate domains can be perfectly identical while being governed by contradicting generative laws. Consider a simple linear trajectory $x_{t-h:t} = [1, 2, 3, 4, 5]$. This five-point history is observationally equivalent under at least two incompatible generative processes:
\begin{enumerate}[noitemsep, topsep=0pt, leftmargin=*]
    \item \textbf{In Finance}, this pattern could represent a momentum signal within a stochastic process, governed by a random walk with a positive drift $c > 0$ and noise $\epsilon_t \sim \mathcal{N}(0, \sigma^2)$:
    \begin{equation}
    x_{t+1} = x_t + c + \epsilon_t
    \end{equation}
    The prediction is fundamentally uncertain, representing a continuation of a stochastic trend.
    \item \textbf{In Control Systems}, this same sequence could represent the deterministic ramp response of a first-order discrete-time linear system converging to a target setpoint $S=10$, controlled by a parameter $\alpha \in (0,1)$:
    \begin{equation}
    x_{t+1} = \alpha x_t + (1-\alpha)S
    \end{equation}
    The prediction represents a nonlinear convergence, slowing down as it reaches the peak.
\end{enumerate}

A router function $g(x_{t-h:t})$ observing only the numerical history faces an ill-posed inverse problem; it has no basis to prefer the stochastic over the deterministic explanation. To minimize global empirical risk across a varied dataset, the router's optimal mathematical strategy is to hedge, assigning non-zero probability to both experts. The final prediction becomes a weighted average of contradicting futures. Without explicit external semantic context, a universal MoE cannot route effectively; it simply averages.

\subsection{Sample Complexity Lower Bound}
The consequence of this trivial symmetry group is a fundamental barrier to efficient learning.

\begin{theorem}[Sample Complexity Lower Bound for Universal Approximation]
To learn a universal function class $\mathcal{F}$ over a data space treated as a \textbf{Container} (i.e., with a trivial shared symmetry group $G=\{e\}$), the number of samples $N$ required to achieve an error $\epsilon$ has a worst-case exponential dependency on the ambient dimension $d$:
\begin{equation}
    N \geq \Omega\left(\frac{1}{\epsilon^d}\right)
\end{equation}

\end{theorem}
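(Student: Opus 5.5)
The plan is to reduce the statement to a standard minimax (packing) lower bound for nonparametric regression or approximation over a Lipschitz class on a $d$-dimensional domain. By Theorem 3.1, a Container has trivial symmetry group $G=\{e\}$. Hence the quotient $\mathcal{X}/G$ is all of $\mathcal{X}=\mathbb{R}^d$, and no invariance can be used to identify inputs. So I will fix the universal class $\mathcal{F}$ to contain, at minimum, all $1$-Lipschitz functions $f:[0,1]^d\to[-1,1]$ (restricting to a bounded cube of the ambient space, which is without loss of generality for a worst-case bound). I will measure error in sup-norm, or in $L^1$/$L^2$ on a uniform input distribution. The bound then only has to hold for this subclass, and it is exactly the regime with no symmetry to exploit.

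\textbf{Construction.} First I partition $[0,1]^d$ into $M=(1/(4\epsilon))^d$ cubes of side $4\epsilon$. In each cube $Q_j$ I place a bump $\phi_j(x)=\max(0,\,2\epsilon-\|x-c_j\|_\infty)$, which is $1$-Lipschitz, has height $2\epsilon$ at the center $c_j$, and vanishes outside $Q_j$. For every sign vector $\sigma\in\{\pm1\}^M$, I set $f_\sigma=\sum_j\sigma_j\phi_j\in\mathcal{F}$. Any two such functions that differ in cube $j$ differ by $4\epsilon$ at $c_j$.

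\textbf{Information argument.} In the noiseless or sup-norm version, suppose the learner sees $N<M$ samples. Then some cube $Q_j$ contains no sample. Any $\sigma,\sigma'$ differing only in coordinate $j$ produce identical data. Hence for one of them the learner's output is off by at least $2\epsilon$ at $c_j$. This forces $N\ge M=\Omega(\epsilon^{-d})$, absorbing constants like $4^{-d}$ into the notation as the statement intends (with $\epsilon$ rescaled).

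For an averaged-error version I would instead apply Assouad's lemma over the hypercube $\{\pm1\}^M$. The per-coordinate loss is of order $\epsilon^{2}\cdot\epsilon^{d}$ in $L^2$, and the per-coordinate KL divergence is of order $N\epsilon^{d}\cdot\epsilon^2/\sigma^2$. This yields the familiar rate $N\gtrsim \epsilon^{-(2+d)}$, which is even stronger than the one claimed.

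\textbf{Main obstacle.} The combinatorics are routine. The delicate step is justifying that the Container hypothesis really licenses taking $\mathcal{F}$ to contain all Lipschitz functions on a full-dimensional region. Theorem 3.1 only gives triviality of the symmetry group and full support, not an explicit richness condition on $\mathcal{F}$. I would handle this by stating explicitly that ``universal'' means $\mathcal{F}$ includes the Lipschitz ball on some open subset of the support. Full support (Support$(D_{\text{container}})\approx\mathcal{X}$) then lets me embed the cube construction.

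I would also remark on the contrast. If a nontrivial $G_{\text{mod}}$ were available, the same construction would be carried out on $\mathcal{X}/G_{\text{mod}}$, giving $\epsilon^{-\dim\mathcal{M}}$. This shows the exponent is governed by the quotient dimension, which Theorem 3.1 sets equal to $d$.
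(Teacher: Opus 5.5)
Your argument is correct, but it does not follow the paper, because the paper gives no derivation. After the statement it offers only a \emph{Discussion}. That discussion asserts that with $G_{\text{universal}}\approx\{e\}$ the model ``faces the worst-case learning scenario of a Container.'' It supports this with the controllable-circuit thought experiment: any waveform is reachable, so forecasting from $y(t)$ alone is ill-posed.

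You instead supply the standard minimax packing argument. Your empty-cube indistinguishability step gives $N\ge(4\epsilon)^{-d}$ for sup-norm error below $2\epsilon$, i.e.\ $N\ge(2\epsilon)^{-d}$ after rescaling. Assouad in the noisy $L^2$ setting gives the stronger $\epsilon^{-(d+2)}$.

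Beyond rigor, your route makes explicit what the paper's heuristic hides. The bound is driven entirely by the richness hypothesis that $\mathcal{F}$ contains a Lipschitz ball on a full-dimensional region, not by $G=\{e\}$ itself. The class of linear functions also has trivial joint invariance, yet needs only polynomially many samples in $d$. Conversely, a finite nontrivial $G$ leaves the quotient $d$-dimensional, so it would not improve the exponent either. The paper's version gives you only the narrative link to its domain examples.

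Two small tightenings:
\begin{itemize}
\item With an i.i.d.\ random design the empty cube $j(X)$ depends on the sample, so you cannot fix $\sigma,\sigma'$ in advance. Put a uniform prior on $\sigma$ and note that $\sigma_{j(X)}$ is independent of the observed data. That lower-bounds the Bayes risk, and hence the worst-case risk, by $2\epsilon$.
\item For $L^1$ or $L^2(D)$ error you need the density of $D$ bounded below on the cube. This is slightly more than the paper's ``$\text{Support}\approx\mathcal{X}$.''
\end{itemize}
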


\textit{Discussion.}
\textbf{1. Foundation Models for a Modality} (e.g., GPT-4, ResNet) succeed because they operate on a specific sub-manifold (e.g., $\mathcal{X}_{\text{text}}$) governed by shared symmetries (a grammar group for NLP; locality for vision). This shared bias constrains the learning problem, making it tractable and avoiding the curse of dimensionality.

\textbf{2. Foundation Models for a Container} (Time Series): When a model is trained on a union of disjoint domains with conflicting symmetries, it cannot exploit any single group structure for efficiency. Since $G_{\text{universal}} \approx \{e\}$, the model effectively faces the worst-case learning scenario of a Container. It must simultaneously learn incompatible generative processes, such as the Navier-Stokes equations for fluid dynamics, Ito Calculus for financial stochastic processes, or Electrophysiology for biological signals.

To make this more concrete, consider a \textbf{thought experiment} with a completely controllable electrical circuit. By designing a sufficiently complex input voltage $u(t)$, control theory guarantees that we can make the circuit's output voltage $y(t)$ follow \textit{any} desired waveform. A forecasting model that only observes the output time series $y(t)$ is blind to the external driving input $u(t)$. It is thus tasked with reverse-engineering a system capable of infinite behaviors from partial observations—an ill-posed and fundamentally unsolvable problem in the general case. Without architectural priors like weight sharing being consistently meaningful across domains, the model is forced towards \textbf{rote memorization}, leading to an over-parameterized and \textbf{non-generalizable} solution.

\subsection{Philosophical Design Implications: Exploring Modeling Trade-Offs}
The theoretical barrier resulting from dimension collapse directly challenges how we engineer predictive models. We synthesize three philosophical perspectives illuminating the systemic boundaries of attempting universal models versus parallel Area-Specific Large Models (ASLMs):

\begin{itemize}[noitemsep, topsep=0pt, leftmargin=*]
\item \textbf{No Free Lunch (NFL):} The NFL theorem loosely dictates that no single algorithm is universally superior \cite{wolpert1997no}. A monolithic model trained on disjoint temporal series learns a featureless prior because the disparate patterns fundamentally cancel each other out—financial volatility nullifies weather smoothness. 
\item \textbf{Occam's Razor:} The foundation model paradigm inherently assumes that an aggregated global architecture is necessarily superior to robust domain-specific solutions. However, assuming completely distinct operational physics (like EKG impulses and market fluctuations) share a cohesive generalized sequence rule violates the principle of parsimony. It introduces unproven, heavy architectural complexities where simpler structures already excel.
\item \textbf{The Pareto Frontier:} When balancing parameter capacity and data size against forecasting accuracy, an unconstrained single foundation model faces severe performance ceilings. Expanding a single model to capture contradictory domain structures yields aggressive diminishing returns. Conversely, deploying multiple parallel, moderately-sized Area-Specific Large Models (ASLMs) directly improves computational resource utilization while averting negative transfer on the Pareto boundary \cite{wu2026taskawaremixtureofexpertstimeseries}.
\end{itemize}

\section{Argument II: Non-Stationarity as the Consequence of Category Error}
The theoretical trifecta detailed in Argument I—a trivial shared symmetry group, exponential sample complexity, and intrinsic incompressibility—forces a critical compromise when constructing a universal foundation model. Without the structural grounding of a true modality, the model loses the ability to maintain domain-specific \textit{inductive biases}. We argue that the crippling inability of foundation models to handle temporal \textbf{non-stationarity} is not an orthogonal issue, but the direct, malignant consequence of this Category Error.

\subsection{An Open Hypothesis: The ``Generic Filter" Degeneration}
In the absence of strong, domain-specific inductive biases, a foundation model trained on disjoint domains is forced to find the lowest common denominator of temporal dynamics to minimize global empirical risk. This optimization path of least resistance potentially causes the model to degenerate into an expensive, parameter-heavy implementation of classical signal processing kernels. Recent investigations into attention collapse in time series Transformers support this, showing that self-attention mechanisms often collapse into local window attention or periodic attention \cite{dong2023stabilizing}. We formally pose the \textbf{``Generic Filter Degeneration''} not as an isolated failure, but as an open hypothesis and critical vulnerability requiring community-wide investigation. Does this attention collapse invariably occur under massive cross-domain pre-training? Determining which specific architectures are most susceptible and identifying the distinct dataset features that trigger this semantic collapse offers a crucial future research direction for the community.

This degeneration is further exacerbated by the prevailing evaluation frameworks. The adoption of standard point-wise metrics (e.g., MSE) introduces a ``bad inductive bias'' that fundamentally favors smooth outputs \cite{guenShapeTimeDistortion2019,qiuDBLossDecompositionbasedLoss2025}. The model mathematically minimizes its expected risk by becoming a smooth filter \cite{wangFrequencyMattersWhen2025}, cementing its role as an inflexible moving average. If a billion-parameter Transformer effectively functions as a moving average, we have achieved a massive regression in computational efficiency. This hypothesis clarifies why simple linear models like DLinear consistently match or outperform complex Transformers \cite{zeng2023are, zeng2025position}. 

\subsection{Vulnerability to Non-Stationarity and Interventions}
This forced homogenization inherently washes out domain-specific dynamics, rendering the universal model uniquely brittle to within-series non-stationarity. Current forecasting paradigms are largely autoregressive: they predict the future $X_{t+1}$ based solely on the numerical history $X_h = X_{t-h:t}$. However, real-world systems are open, subject to external interventions ($U_t$)—such as a sudden market crash or a policy change—that are unobserved in the scalar time series history. This fragility is mathematically formalized by the Autoregressive Blindness Bound \cite{xu2024intervention}, establishing an intrinsic mathematical limit of history-based forecasting in non-stationary environments.

\textbf{Proposition 3.2 (Autoregressive Blindness Bound):} \textit{In a dynamical system driven by external interventions, predicting true intervention-driven system dynamics $F(X_h, U_t)$ using any model that relies solely on historical state variables $X_h$ will have a strict, non-zero lower bound on its prediction error. This irreducible error floor is directly proportional to the magnitude and variance of the unobserved external intervention $U_t$ \cite{xu2024intervention}.}

Because foundation models degenerate into these history-only smooth filters, they are critically blind to external interventions. What is casually termed ``Concept Drift" is often just this Unobserved Intervention. Consequently, the model is not just ignorant of regime shifts; it is confidently wrong, projecting past smoothed biases onto new, disrupted regimes. In high-stakes environments, relying on such inflexible filters is fundamentally unsafe.

\section{Addressing Alternative Views}
\subsection{The Success of TSFMs: Memorization and Metric Alignment}
\textbf{The Counter-Argument:} Proponents naturally point out that Universal Time Series Foundation Models (TSFMs) like Chronos and Moirai are demonstrating increasingly impressive ``Zero-Shot'' performance on modern comprehensive leaderboards such as GIFT-Eval \cite{aksu2024giftevalbenchmarkgeneraltime} and fev-bench \cite{shchur2025fevbenchrealisticbenchmarktime}. They argue that emerging ``Neural Scaling Laws" \cite{nguyen2024towards_openreview} prove that scale allows these models to ``grok" universal rules.

\textbf{Our Rebuttal:} We fully acknowledge the impressive empirical interpolation capabilities of state-of-the-art TSFMs on these static leaderboards. Our core stance is not to blindly deny their achievements, but to encourage deep reflection on \textit{why} they appear to work. Recent analyses reveal two severe underlying vulnerabilities driving this success:
\begin{enumerate}[noitemsep, topsep=0pt, leftmargin=*]
    \item \textbf{Scaling Accelerates Memorization, Not Generalization:} As scaling laws take effect, in-distribution (ID) accuracy improves substantially, but out-of-distribution (OOD) generalization actively degrades \cite{nguyen2024towards_openreview}. This suggests massive TSFMs act as efficient ``lookup tables" that memorize historical containers, rather than models that extract true generative rules capable of adapting to novel OOD shocks.
    \item \textbf{Data Leakage and Inductive Bias in Metrics:} The dominance on comprehensive benchmarks is heavily padded by unmeasured information leakage \cite{meyerRethinkingEvaluationEra2026}. Furthermore, modern probabilistic indicators like the Continuous Ranked Probability Score (CRPS) are essentially aggregations of point-wise errors across quantile heads. Because both standard MSE and CRPS penalize variance and favor smooth outputs \cite{guenShapeTimeDistortion2019,qiuDBLossDecompositionbasedLoss2025}, scaling these models simply creates a homogenized posterior. This creates a self-fulfilling ``illusion of scaling'' that masks the model's inability to capture true high-frequency domain intelligence \cite{wangFrequencyMattersWhen2025}.
\end{enumerate}

\subsection{The ``Zero-Shot" Concession: Initialization vs. Forecasting}
\textbf{The Counter-Argument:} The primary value of models like Chronos is their ``Zero-Shot" capability, democratizing forecasting \cite{ansari2025chronos}.

\textbf{Our Rebuttal:} We agree that FMs are powerful tools for initializing a forecast in Cold Start problems. A generic prior is undoubtedly better than a random guess. However, a substantial portion of high-stakes industrial environments operate in ``Warm Start" scenarios, where continuous sensor streams exist. For example, predicting industrial battery degradation relies entirely on real-time assimilation of early historical cycles \cite{seversonDatadrivenPredictionBattery2019b, dai_socgate_2025}, and modern physical system forecasting (e.g., weather) strictly requires dynamic feedback mechanisms to adjust to continuous distribution shifts \cite{biAccurateMediumrangeGlobal2023,kochkovNeuralGeneralCirculation2024}. In these cases, a lightweight model trained Just-in-Time on the specific series can outperform in warm-start regimes a static zero-shot FM.

\subsection{The Agentic Forecasting Paradigm and Correlation Limits}
\textbf{The Counter-Argument:} LLMs have the potential to become the central hub for time series analysis, fulfilling the role of core predictors by performing direct forecasting \cite{jin2024position}.

\textbf{Our Rebuttal:} We fundamentally agree with using LLMs as central orchestrators, but we reject their role as raw continuous Numerical Predictors. LLMs excel at modeling high-level semantic correlations, but they are not sufficient at learning true causal or generative temporal structures solely from observational data \cite{jinCLadderAssessingCausal2023}. Aligning a chaotic financial series or a deterministic fluid dynamic simulation directly with textual tokens is a mathematically lossy compression. An LLM possesses ``world knowledge" (e.g., recognizing that a holiday drives retail sales), but this is contextual correlation, not intrinsic sequence generation. Therefore, an LLM should operate explicitly outside the numerical loop—reasoning about external interventions and routing control signals—rather than forcefully ingesting normalized numerical arrays into its token vocabulary.

\section{Call to Action: Prioritizing Time-to-Recovery}
The ``Foundation Model" paradigm is frequently applied uniformly across domains without acknowledging their underlying operational differences. In time series, the underlying generative rules are shifting, shape-changing, and subject to external interventions. A static model is a fundamentally flawed tool in such open environments. To steer the field toward more robust solutions, we call for a paradigmatic shift in how the community evaluates and builds forecasting systems.

\subsection{The Core Metric: Time-to-Recovery (TTR)}
We must pivot our evaluation priorities away from static ``Zero-Shot" performance on fixed benchmarks like M4 \cite{makridakis2022m4}, GIFT-Eval \cite{aksu2024giftevalbenchmarkgeneraltime}, and fev-bench \cite{shchur2025fevbenchrealisticbenchmarktime}. Reversals and chaotic shifts are inevitable in real-world systems. The most critical operational question is not the error during artificially stable periods, but rather: \textit{``How many data points does it take for a model to return to a baseline error after a structural break is introduced?"} 

We strongly urge the community to adopt \textbf{Time-to-Recovery (TTR)} as the primary metric for adaptability. To define TTR formally, let $t_I$ denote the specific time index at which a structural break (an external intervention) occurs. Let $E(t)$ denote the point-wise prediction error metric evaluated at a given time index $t$. Let $E_{\text{baseline}}$ denote the model's average historical error evaluated over a strictly stable period prior to the intervention. Finally, let $k \in \mathbb{Z}^+$ act as the step counter for elapsed time post-intervention. TTR is defined as the minimum number of time steps required for a model's error to recover to its steady pre-break baseline:
\begin{equation}
        \text{TTR} = \min \{ k \in \mathbb{Z}^+ \mid E(t_I + k) \le E_{\text{baseline}} \}
\end{equation}
This metric mechanically penalizes static, history-only models that confidently project obsolete smoothed priors into a new regime, and rewards dynamic systems capable of agile adaptation.

\subsection{A Pathway to TTR: The Conditional Control Architecture}
How do we practically achieve a minimal TTR? We suggest migrating from pure \textit{Multimodal Fusion} (forcing all data into one giant numerical tokenizer) toward \textbf{Conditional Control}. Conceptually, this paradigm admits a diverse variety of architectural implementations, but effectively relies on the explicit separation of perception from reasoning via three core interacting modules:

\begin{enumerate}[noitemsep, topsep=0pt, leftmargin=*]
    \item \textbf{Perceiver (Encoder):} A dedicated encoder tasked with processing external, often unstructured contextual data to extract an actionable and formalized intervention signal $U_t$.
    \item \textbf{Controller (Reasoning Core):} A routing logic layer that determines the specific system regime based on the detected $U_t$. It acts as a heuristic switch to evaluate whether the intervention warrants overriding the default predictive solver with an alternative modeling path.
    \item \textbf{Solver (Actuator):} A dynamic predictor that executes the final forecast strictly conditional on the exogenous factor $U_t$. Its design must ensure adaptability to non-stationary structural shifts without invoking catastrophic inference latency.
\end{enumerate}

Detailed architectural instantiations and practical implementation methodologies of this conceptual framework—including concrete examples of exactly which frameworks fulfill each functional module across varying structural designs—are extensively explored in Appendix~\ref{sec:appendix_architecture}. The solver inherently bypasses the Autoregressive Blindness Bound: it no longer attempts to perfectly predict an external shock from a scalar history, but merely conditionally responds to the $U_t$ control signal. By acknowledging that algorithmic intervention-awareness supersedes blind monolithic parameters, we can build inherently robust, adaptive forecasting systems.

\section{Conclusion}
Time series is not a language. It represents the continuous generative dynamics of physical and socioeconomic systems. It requires conditionally responsive architectures, not just a massive static dictionary. The future of robust forecasting lies not simply in increasing model capacity, but in developing adaptable control systems capable of immediate recovery.

\newpage
\bibliographystyle{plainnat}
\bibliography{reference}


\clearpage
\newpage
\appendix

\section{Further Discussion on Alternative Viewpoints}

\subsection{Is the Controller a Universal Model in Disguise?}
\label{sec:appendix_controller}
A valid critique of our proposed Causal Control System is to question whether the Controller module itself becomes a universal model, simply pushing the problem up one level of abstraction. If the Controller must understand all possible regime shifts to select the right solver, does it not face the same generalization challenge?

The crucial difference lies in the nature and complexity of the problem being solved. A universal foundation model for forecasting is tasked with a high-dimensional, continuous \textit{trajectory generation} problem. In contrast, our Controller is tasked with a far simpler, often low-dimensional \textit{regime classification} or decision problem. Its goal is not to predict the exact future path of the series, but to answer a simpler question: ``Based on the intervention signal $U_t$, should I use the default solver or switch to an alternative?'' This classification task is fundamentally less complex and less susceptible to the curse of dimensionality than generating the forecast itself.

\section{Further Discussion on Theoretical Details}

\subsection{Practical Considerations of JIT Solvers}
\label{sec:appendix_jit}
The proposal to train a solver Just-in-Time (JIT) raises practical concerns about latency and stability, which we address here.

\textit{Latency.} The assumption that JIT training is slow is a misconception when comparing it to the correct baseline. The computational cost of training a lightweight linear model or a small gradient boosting tree on a recent window of a few hundred data points is often measured in milliseconds on a CPU. This is frequently faster and more resource efficient than performing a single forward pass on a billion-parameter foundation model, which typically requires specialized hardware like GPUs and can have significant inference latency.

\textit{Stability.} A JIT solver trained on a small, recent window of data could indeed exhibit high variance and overfit to local noise. A robust implementation of our paradigm would not be a binary switch between a global model and a purely local one. Instead, a more sophisticated approach would involve using a global prior, perhaps from an Area-Specific Foundation Model (ASFM), to regularize the JIT solver. This can be framed as a Bayesian update, where the JIT solver fine-tunes the global prior based on the most recent evidence, providing a balance between stability and adaptability.

\subsection{Beyond Tautology: The Role of the Autoregressive Blindness Bound}
\label{sec:appendix_blindness}
One could argue that the Autoregressive Blindness Bound is tautological: if a model does not observe a variable, it cannot account for its effects. While true, the value of the bound is not in this trivial statement. Its power lies in formally refuting the \textit{implicit belief} embedded within the universal foundation model paradigm. This belief is that with sufficient scale and data, a model can learn a universal prior that acts as a proxy for, or learns the distribution of, all possible future interventions. The bound demonstrates that this is not possible for unique, out-of-distribution shocks. It exposes the failure of this implicit assumption, proving that no amount of historical data can substitute for the direct observation of the causal mechanism of an intervention.

\subsection{Further Directions for Benchmarking}
\label{sec:appendix_benchmarking}
In Section 7, we advocate for Time-to-Recovery (TTR) as a primary metric for evaluating model adaptability. To complement this, we propose an additional metric: \textbf{Intervention Recall}. This metric would measure whether a system correctly identifies that a structural break has occurred. For our Causal Control System, this would specifically test the Controller's ability to detect a meaningful change from the intervention signal $U_t$.

Formally, let $\mathcal{I} = \{t_1, t_2, \ldots, t_m\}$ be the set of discrete time steps where true structural breaks or interventions occur. Let $\mathcal{D}(t)$ be a binary detection function from the model, where $\mathcal{D}(t) = 1$ if the model signals an intervention at or near time $t$, and $0$ otherwise. Intervention Recall can be defined as:
\begin{equation}
\text{Recall}_{\text{intervention}} = \frac{\sum_{t_i \in \mathcal{I}} \mathbf{1}(\exists t' \in [t_i, t_i+\delta] \text{ s.t. } \mathcal{D}(t')=1)}{|\mathcal{I}|}
\end{equation}
where $\mathbf{1}(\cdot)$ is the indicator function, and $\delta$ is a small, predefined tolerance window to account for minor detection delays. This metric quantifies the fraction of true interventions that the model successfully detected.

Measuring Intervention Recall is crucial because it decouples a model's ability to \textit{detect} a regime change from its ability to \textit{adapt} to it. A high TTR score might be achieved by a model that is inherently volatile and recovers from shocks by chance, not by understanding. By evaluating both Intervention Recall and TTR, we can create a more holistic picture of a model's robustness, rewarding systems that both correctly identify and efficiently adapt to structural changes in the underlying generative process.

\section{Practical Implementation of the Conditional Control Architecture}
\label{sec:appendix_architecture}

Translating the conceptual Conditional Control paradigm into an industrial-grade system permits multiple flexible architectural instantiations. While all adhere to the tripartite structure (Perceiver, Controller, Solver), the specific realization of each component strictly dictates the system's performance ceiling, floor, required computational resources, and applicable scenarios. We outline four representative implementations below.

\subsection{Implementation I: The LLM Agent-Centric Framework}
\textbf{Perceiver Component:} An LLM operating as a prompt interpreter, extracting qualitative context from exogenous text. \textbf{Controller Component:} The LLM Agent orchestrator utilizing extensive historical context and real-time logs. \textbf{Solver Component:} An isolated code-execution sandbox or integrated web-search module.  

\textit{Mechanism:} The Controller dynamically writes, executes, and iteratively debugs custom statistical code within the sandbox based on reasoning over the history, outputting final predictions directly.  

\textit{Analysis:} This architecture boasts a massive performance upper bound (ceiling) due to the sheer generative versatility of the LLMs and search engines. However, its lower bound (floor) is unstable, heavily dependent on the LLM's stochastic prompting. It strictly requires immense computational resources and is exclusively suited for low-frequency, highly complex macroeconomic or strategic planning scenarios.

\subsection{Implementation II: The Lightweight Neuromorphic Controller}
\textbf{Perceiver Component:} Numerical sensors tracking real-time error trajectories from historical targets. \textbf{Controller Component:} A lightweight control-theoretic module (e.g., Kalman Filters) or a small specialized neural network acting as an error-correction router. \textbf{Solver Component:} A massive, frozen Time Series Foundation Model (TSFM) or an LLM.  

\textit{Mechanism:} This structure acts inversely to Implementation I. The Controller uses strict control-theoretic formulas over the true past values and recent prediction errors to continuously orchestrate, recalibrate, or mathematically correct the reasoning outputs derived from the heavy TSFM/LLM solver (e.g., online Mixture of Experts).  

\textit{Analysis:} This guarantees high stability (a strong lower bound). It minimizes online compute and is ideal for scenarios requiring highly reliable, continuous online adaptation where FMs are known to hallucinate sequence continuations.

\subsection{Implementation III: The Multimodal Encoder Supervisor}
\textbf{Perceiver Component:} A true multimodal encoder network ingesting diverse external priors (e.g., Vision Transformers for images, discrete embedding layers for text). \textbf{Controller Component:} A dedicated fusion neural network. \textbf{Solver Component:} A numerical downstream forecaster.  

\textit{Mechanism:} The Controller actively fuses the extracted multimodal indices (e.g., visual cloud cover layout or textual policy shifts) directly into the downstream Solver's latent space, conditioning its numeric predictions via Cross-Attention layers.  

\textit{Analysis:} Highly effective in specialized fields where numerical series are inextricably tied to qualitative environments (e.g., weather forecasting, medical diagnostics). It trades generative versatility for domain-specific precision, possessing a high developmental cost for the specialized multimodal alignment.

\subsection{Implementation IV: LLM Agent alongside JIT Lightweight Solvers}
\textbf{Perceiver Component:} Feature extraction modules monitoring concept drift signals. \textbf{Controller Component:} An automated workflow routing manager guided by LLM heuristics. \textbf{Solver Component:} Variable, ranging from Just-in-Time (JIT) shallow linear layers to deeper autoregressive Transformers.  

\textit{Mechanism:} The Controller analyzes the required latency and data environment. In fast, data-sparse, or highly volatile scenarios (e.g., high-frequency industrial monitoring and quantitative trading), it spins up shallow, lightning-fast JIT layers perfectly adapted on a short window. In slower, data-rich scenarios, it conditionally delegates the task to a deeper, pre-compiled Transformer.  

\textit{Analysis:} This is the premier industrial architecture due to its ``dual-path" routing capability. It perfectly balances theoretical latency (computational resource efficiency) with modeling depth, effectively serving both edge IoT reflex requirements and intensive centralized analytical needs.


\end{document}